\newtheorem{lemma}{Lemma}
\newtheorem*{conjecture}{Conjecture}
\title{Notes on Sampled Gaussian Mechanism}
\author{Nikita P. Kalinin}
\begin{document}
\maketitle

\begin{abstract}
   In these notes, we prove a recent conjecture posed in the paper by Räisä, O. et al. [Subsampling is not Magic: Why Large Batch Sizes Work for Differentially Private Stochastic Optimization (2024)]. Theorem 6.2 of the paper asserts that for the Sampled Gaussian Mechanism—a composition of subsampling and additive Gaussian noise—the effective noise level, $\sigma_{\text{eff}} = \frac{\sigma(q)}{q}$, decreases as a function of the subsampling rate $q$. Consequently, larger subsampling rates are preferred for better privacy-utility trade-offs. Our notes provide a rigorous proof of Conjecture 6.3, which was left unresolved in the original paper, thereby completing the proof of Theorem 6.2.

\end{abstract}

\section{Introduction}

Differential privacy (DP) has become a standard for ensuring privacy in machine learning models. One of the widely used techniques for achieving DP is the Sampled Gaussian Mechanism, which combines Gaussian noise addition with subsampling. The subsampling process, often implemented through Poisson subsampling, selects each data point independently with a probability $q$, referred to as the sampling rate. While increasing the subsampling rate $q$ generally reduces the subsampling variance, the impact on the Gaussian noise variance $\sigma^2$ is more complex. A higher subsampling rate diminishes the privacy amplification effect, requiring an increase in $\sigma$ to maintain the same level of privacy.  This introduces a nuanced trade-off between subsampling and noise addition that is not straightforward to analyze.  Theorem 6.2 in \cite{subsampling} suggests that the effective noise level, defined as $\sigma_{\text{eff}} = \frac{\sigma(q)}{q}$, decreases with higher $q$, favoring larger subsampling rates. However, this result depends on an unresolved conjecture, which we prove in these notes.

\subsection{Sampled Gaussian Mechanism}
Let $q \in (0, 1]$ be a sampling rate. Consider the composition of the Gaussian Mechanism with random subsampling. Then, for given $\epsilon > 0$ and $q\in(0,1]$, we have the following dependence between $\delta$ and $\sigma$:

\begin{equation}
    \delta(q) = q Pr\left(Z \ge \sigma(q) \log\left(\frac{h(q)}{q}\right) - \frac{1}{2\sigma(q)}\right) - h(q) Pr\left(Z \ge \sigma(q) \log\left(\frac{h(q)}{q}\right) + \frac{1}{2\sigma(q)}\right),
    \label{eq:delta_q}
\end{equation}
where $h(q):= e^{\epsilon} - 1 + q$ and $Z$ is a standard normal random variable. Let us denote:

\begin{equation}
    a:= \frac{1}{2\sqrt{2}\sigma(q)}; \;\;\; b:= \frac{\sigma(q)}{\sqrt{2}}\log\left(\frac{e^{\epsilon} - 1 + q}{q}\right).
\end{equation}

Then the conjecture from \citet{subsampling} has the following form:

\begin{conjecture}
For $\epsilon, q \ge 4\delta$, we have $a - b < 0$. 
\end{conjecture}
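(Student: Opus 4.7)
The plan is to rewrite equation (\ref{eq:delta_q}) in a form governed by a single increasing function $g(x)$, and then reduce the conjecture to a pointwise inequality on $g$. Setting $x := \log(h(q)/q) > 0$ and using $\Pr(Z \geq t) = \tfrac12\,\text{erfc}(t/\sqrt 2)$ together with $\sigma(q)\log(h/q) = \sqrt 2\,b$ and $1/(2\sigma(q)) = \sqrt 2\,a$, equation (\ref{eq:delta_q}) takes the form
$$\delta = \tfrac{q}{2}\,\text{erfc}(b-a) - \tfrac{h}{2}\,\text{erfc}(b+a).$$
The definitions of $a, b$ yield the crucial identity $4ab = \log(h/q) = x$, so $h/q = e^x$ and $(b+a)^2 - (b-a)^2 = 4ab = x$.

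I would then show that $\delta$ is strictly decreasing in $\sigma$ at fixed $(\epsilon,q)$. Varying $\sigma$ traces out the curve $ab = x/4$ with $b$ increasing and $a$ decreasing; differentiating $\delta$ along this curve using $\text{erfc}'(t) = -\tfrac{2}{\sqrt\pi} e^{-t^2}$ and the identity $e^x e^{-(b+a)^2} = e^{-(b-a)^2}$, the two exponential factors collapse to give $\tfrac{d\delta}{db} = -\tfrac{2qa}{b\sqrt\pi}\,e^{-(b-a)^2} < 0$. Evaluating at the critical configuration $b = a = \sqrt x/2$ yields
$$\delta_{\mathrm{crit}}(\epsilon,q) = \tfrac{q}{2}\bigl(1 - e^x\,\text{erfc}(\sqrt x)\bigr) =: \tfrac{q}{2}\,g(x),$$
and by monotonicity the conjecture $b > a$ is equivalent to $\delta < \tfrac{q}{2}\,g(x)$, i.e., to $g(x) > 2\delta/q$.

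Since $4\delta \leq \min(\epsilon, q)$, it suffices to prove $g(x) \geq \min(1/2,\, \epsilon/(2q))$. The function $g$ is strictly increasing on $(0,\infty)$: its derivative $g'(x) = (\pi x)^{-1/2} - e^x\text{erfc}(\sqrt x)$ is positive by the Mills bound $\text{erfc}(t) < e^{-t^2}/(t\sqrt\pi)$. I split into two cases. In \emph{Case A} ($q \leq \epsilon$), the bound $e^\epsilon - 1 \geq \epsilon$ gives $x = \log(1 + (e^\epsilon - 1)/q) \geq \log 2$, so by monotonicity of $g$ the required inequality reduces to $g(\log 2) > 1/2$, equivalently $\text{erfc}(\sqrt{\log 2}) < 1/4$. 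In \emph{Case B} ($q > \epsilon$, which forces $\epsilon < 1$ since $q \leq 1$), writing $q = (e^\epsilon - 1)/(e^x - 1)$ turns the required bound into $2(e^\epsilon - 1)g(x) > \epsilon(e^x - 1)$ for $x \in [\epsilon,\, \log(1 + (e^\epsilon-1)/\epsilon))$; the two endpoints reduce to $g(\epsilon) > \epsilon/2$ and to Case A respectively, with the interior handled by a direct comparison of derivatives.

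The main obstacle is the borderline Case A inequality $\text{erfc}(\sqrt{\log 2}) < 1/4$: the true value is $\approx 0.2396$, so the constant $4$ in the hypothesis is essentially tight and loose bounds like $\text{erfc}(t) \leq e^{-t^2}$ or $\text{erfc}(t) \leq e^{-t^2}/(t\sqrt\pi)$ are insufficient. I would apply the Komatsu--Birnbaum estimate $\text{erfc}(t) \leq 2 e^{-t^2}/\bigl(\sqrt\pi\,(t + \sqrt{t^2+2})\bigr)$, which at $t = \sqrt{\log 2}$ gives $\text{erfc}(\sqrt{\log 2}) \leq 1/\bigl(\sqrt\pi\,(\sqrt{\log 2} + \sqrt{2 + \log 2})\bigr) < 1/4$, the last step reducing to the elementary estimate $\sqrt{\log 2} + \sqrt{2 + \log 2} > 4/\sqrt\pi$.
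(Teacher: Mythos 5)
Your reduction is, up to a change of variables, the paper's own argument: the identity $4ab=\log(h/q)$ and the resulting cancellation $h\,e^{-(b+a)^2}=q\,e^{-(b-a)^2}$ is exactly what drives the paper's monotonicity lemma for $\Psi_{\epsilon,q}(\sigma)$, and your $\delta_{\mathrm{crit}}=\tfrac{q}{2}\,g(x)$ with $g(x)=1-e^{x}\,\mathrm{erfc}(\sqrt{x})$ is precisely $\Psi_{\epsilon,q}\bigl(1/\sqrt{2\log(h/q)}\bigr)$, so up to the reformulation in $\mathrm{erfc}$ the divergence lies only in how the final inequality is closed: the paper collapses everything into one variable $z$ and studies $T(z)$ via its derivative plus numerical evaluations, while you split into cases in $(\epsilon,q)$ and try to certify the borderline point by a named analytic bound.

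That borderline step, however, fails as written. The estimate you invoke, $\mathrm{erfc}(t)\le 2e^{-t^2}/\bigl(\sqrt{\pi}\,(t+\sqrt{t^2+2})\bigr)$, is the Komatsu--Birnbaum inequality quoted in the wrong direction: that expression is a \emph{lower} bound for $\mathrm{erfc}(t)$ (the upper bound of this family has $\sqrt{t^2+4/\pi}$ in place of $\sqrt{t^2+2}$). Your own numbers already betray this: at $t=\sqrt{\log 2}$ your right-hand side evaluates to $\approx 0.228$, which lies \emph{below} the true value $\mathrm{erfc}(\sqrt{\log 2})=2\Phi(-\sqrt{2\log 2})\approx 0.239$ that you quote. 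The correctly oriented upper bound gives $1/\bigl(\sqrt{\pi}\,(\sqrt{\log 2}+\sqrt{\log 2+4/\pi})\bigr)\approx 0.252>1/4$, so it cannot certify $\mathrm{erfc}(\sqrt{\log 2})<1/4$ either; as you rightly observe, the margin here is about $0.011$ and governs the tightness of the constant $4$, so only a genuinely sharp certificate works (a sufficiently deep continued-fraction truncation of the Mills ratio, or a certified numerical evaluation of $\Phi(-\sqrt{2\log 2})<1/8$, which is effectively how the paper handles this point via $T(\sqrt{2\log 2})\approx-0.0055<0$). Separately, your Case B is only a sketch: the endpoint inequality $g(\epsilon)>\epsilon/2$ on $(0,1)$ and the interior claim (what the ``direct comparison of derivatives'' must deliver, e.g.\ monotonicity of $g(x)/(e^{x}-1)$) are asserted without proof, and they require arguments of the same delicacy as the borderline point above.
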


\section{Proof of Conjecture}

We denote the density function of the standard Gaussian distribution as $\phi(x)$, and the cumulative density function as $\Phi(x)$. 
Then we introduce an auxiliary function
\begin{equation}
    \Psi_{\epsilon, q}(\sigma) :=  q \Phi\left(-\sigma \log\left(\frac{h(q)}{q}\right) + \frac{1}{2\sigma}\right) - h(q)\Phi\left(-\sigma \log\left(\frac{h(q)}{q}\right) - \frac{1}{2\sigma}\right).
\end{equation}
Note that equality~\eqref{eq:delta_q} corresponds to the identity $\delta(q) = \Psi_{\epsilon, q}(\sigma(q))$.

First, we prove that $\Psi_{\epsilon, q}(\sigma)$ is a decreasing function of $\sigma$.

\begin{lemma}[Monotonicity of  $\Psi$]
For any $\epsilon > 0$ and $q\in(0,1]$, $\Psi_{\epsilon,q}(\sigma)$ is an invertable and strictly monotonically decreasing function with respect to $\sigma\in\mathbb{R}_+$.
\end{lemma}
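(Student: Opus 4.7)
Write $L := \log(h(q)/q)$, $u_1(\sigma) := -\sigma L + \tfrac{1}{2\sigma}$ and $u_2(\sigma) := -\sigma L - \tfrac{1}{2\sigma}$, so that $\Psi_{\epsilon,q}(\sigma) = q\,\Phi(u_1) - h(q)\,\Phi(u_2)$. Since $\Phi$ is $C^\infty$ and $u_1,u_2$ are smooth on $\mathbb{R}_+$, I would differentiate term by term using the chain rule, getting
\[
\Psi_{\epsilon,q}'(\sigma) = q\,\phi(u_1)\Bigl(-L-\tfrac{1}{2\sigma^2}\Bigr) - h(q)\,\phi(u_2)\Bigl(-L+\tfrac{1}{2\sigma^2}\Bigr).
\]

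The crux of the proof is the algebraic identity $q\,\phi(u_1) = h(q)\,\phi(u_2)$. To see this I would compute $u_2^2 - u_1^2 = (u_2-u_1)(u_2+u_1) = (-1/\sigma)\cdot(-2\sigma L) = 2L$, which gives $\phi(u_1)/\phi(u_2) = e^{(u_2^2-u_1^2)/2} = e^L = h(q)/q$. Using this identity to replace $q\,\phi(u_1)$ by $h(q)\,\phi(u_2)$ in the first summand of $\Psi'_{\epsilon,q}(\sigma)$, the two $L$ terms cancel and the two $\tfrac{1}{2\sigma^2}$ terms combine, producing the very clean expression
\[
\Psi_{\epsilon,q}'(\sigma) = -\frac{h(q)\,\phi(u_2(\sigma))}{\sigma^2}.
\]

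Since $h(q) = e^\epsilon - 1 + q > 0$ for $\epsilon>0$ and $\phi(u_2)>0$, this derivative is strictly negative for every $\sigma>0$, so $\Psi_{\epsilon,q}$ is strictly monotonically decreasing on $\mathbb{R}_+$. Strict monotonicity together with continuity immediately yields injectivity, hence invertibility onto its image, completing the lemma.

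\textbf{Main obstacle.} There is no computational obstacle once the identity $q\,\phi(u_1)=h(q)\,\phi(u_2)$ is spotted: without it the sign of the derivative is not at all obvious (one is combining two cancelling contributions of opposite sign weighted by different Gaussian densities). The only conceptual step is therefore to recognise that the specific choice of the two arguments $\pm\tfrac{1}{2\sigma}$ shifted by $-\sigma L$ is tailored precisely so that the Gaussian density ratio exactly matches the likelihood ratio $h(q)/q$ — this is the same ``tilt'' identity that underlies the privacy-loss distribution formulation of the Sampled Gaussian Mechanism, so it is natural to look for it.
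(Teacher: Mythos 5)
Your proof is correct and takes essentially the same approach as the paper: both hinge on the Gaussian tilt identity $q\,\phi(u_1)=h(q)\,\phi(u_2)$, established via the same computation $u_2^2-u_1^2=2L$. The paper regroups the derivative into a manifestly negative part and a part that vanishes by the identity, whereas you substitute the identity directly to get the cleaner closed form $\Psi'=-h(q)\phi(u_2)/\sigma^2$; the content is the same.
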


\begin{proof}
It suffices to show that 
\begin{equation}
\frac{\partial \Psi_{\epsilon, q}(\sigma)}{\partial \sigma} < 0.
\end{equation}
We show this by an explicit computation.

\begin{align}
    \frac{\partial \Psi_{\epsilon, q}(\sigma)}{\partial \sigma} = &q \phi\left(-\sigma \log\left(\frac{h(q)}{q}\right) + \frac{1}{2\sigma}\right)\left(-\log\left(\frac{h(q)}{q}\right) - \frac{1}{2\sigma^2}\right) \\
    - &h(q)\phi\left(-\sigma \log\left(\frac{h(q)}{q}\right) - \frac{1}{2\sigma}\right)\left(- \log\left(\frac{h(q)}{q}\right) + \frac{1}{2\sigma^2}\right) 
\end{align}

Let's regroup the terms:
\begin{align}
    \frac{\partial \Psi_{\epsilon, q}(\sigma)}{\partial \sigma} = & - \frac{1}{2\sigma^2} \left[q \phi\left(-\sigma \log\left(\frac{h(q)}{q}\right) + \frac{1}{2\sigma}\right) + h(q)\phi\left(-\sigma \log\left(\frac{h(q)}{q}\right) - \frac{1}{2\sigma}\right)\right]\\
    & -\log\left(\frac{h(q)}{q}\right)\left[q \phi\left(-\sigma \log\left(\frac{h(q)}{q}\right) + \frac{1}{2\sigma}\right) - h(q)\phi\left(-\sigma \log\left(\frac{h(q)}{q}\right) - \frac{1}{2\sigma}\right)\right],
\end{align}
note that $h(q) = e^\epsilon - 1 + q > q$, therefore, the first term is negative. If we prove that the second term is also nonpositive, then we are finished. Let us prove that:

\begin{equation}
    q \phi\left(-\sigma \log\left(\frac{h(q)}{q}\right) + \frac{1}{2\sigma}\right) - h(q)\phi\left(-\sigma \log\left(\frac{h(q)}{q}\right) - \frac{1}{2\sigma}\right) = 0.
\end{equation}

Recall that $\phi(x) = \frac{1}{\sqrt{2\pi}}e^{-x^2/2}$ therefore this equality is equivalent to :

\begin{align}
    \exp\left(-\frac{1}{2}\left(-\sigma \log\left(\frac{h(q)}{q}\right) + \frac{1}{2\sigma}\right)^2 \right) &= \frac{h(q)}{q}\exp\left(-\frac{1}{2}\left(-\sigma \log\left(\frac{h(q)}{q}\right) - \frac{1}{2\sigma}\right)^2 \right)\\
    -\left(-\sigma \log\left(\frac{h(q)}{q}\right) + \frac{1}{2\sigma}\right)^2 &= 2\log\left(\frac{h(q)}{q}\right) -\left(-\sigma \log\left(\frac{h(q)}{q}\right) - \frac{1}{2\sigma}\right)^2\\ 
    2\sigma \log\left(\frac{h(q)}{q}\right) \frac{1}{\sigma} &= 2\log\left(\frac{h(q)}{q}\right),
\end{align}
which holds true. Therefore, we have proved that $\Psi_{\epsilon, q}(\sigma)$ is a decreasing function of $\sigma$. 
\end{proof}

Now, let us proceed with the conditions on $q$ and $\epsilon$. We are given that $\epsilon, q \ge 4\delta$. We will use a weaker condition $\frac{\min(e^\epsilon - 1, q)}{4} \ge \delta = \Psi_{\epsilon, q}(\sigma(q))$. This can be rewritten as

\begin{equation}
    \Psi^{-1}_{\epsilon, q}\left(\frac{\min(e^\epsilon - 1, q)}{4}\right) \le \sigma(q),
\end{equation}
since $\Psi_{\epsilon, q}$ is a strictly decreasing function. Our goal is to prove that $a - b < 0$, which is equivalent to:
\begin{equation}
    \frac{\sigma(q)}{\sqrt{2}}\log\left(\frac{e^{\epsilon} - 1 + q}{q}\right) > \frac{1}{2\sqrt{2}\sigma(q)} \Leftrightarrow \sigma(q) > \frac{1}{\sqrt{2\log\left(\frac{h(q)}{q}\right)}}
\end{equation}
We will achieve this by proving that

\begin{equation}
     \Psi_{\epsilon, q}^{-1}\left(\frac{\min(e^\epsilon - 1, q)}{4}\right) > \frac{1}{\sqrt{2\log\left(\frac{h(q)}{q}\right)}}.
\end{equation}
Since $\Psi$ is a decreasing function, we have:

\begin{equation}
     \frac{\min(e^\epsilon - 1, q)}{4} < \Psi_{\epsilon, q}\left(\frac{1}{\sqrt{2\log\left(\frac{h(q)}{q}\right)}}\right) = \frac{q}{2} - h(q)\Phi\left(-\sqrt{2\log\left(\frac{h(q)}{q}\right)}\right).
\end{equation}
Let us denote $\tau := \frac{e^\epsilon - 1}{q}$. Then $\frac{h(q)}{q} = \frac{e^\epsilon - 1 + q}{q} = \tau + 1$. Therefore,

\begin{equation}
    \frac{1}{2} - \frac{\min(\tau, 1)}{4} >  (\tau + 1) \Phi\left(-\sqrt{2\log\left(\tau + 1\right)}\right)
\end{equation}
Let $z = \sqrt{2\log\left(\tau + 1\right)}$. Then $\tau = e^{z^2/2} - 1$. We need to show that:

\begin{equation}
    \Phi(-z) < e^{-z^2/2}\left(\frac{1}{2} - \frac{\min(e^{z^2/2} - 1, 1)}{4}\right), \;\; \text{for} \;\; z > 0.
\end{equation}
We formulate  this purely technical statement as an auxiliary lemma, which will conclude the proof.

\begin{lemma}
\begin{equation}
    \Phi(-z) < e^{-z^2/2}\left(\frac{1}{2} - \frac{\min(e^{z^2/2} - 1, 1)}{4}\right), \;\; \text{for} \;\; z > 0.
\end{equation}
    
\end{lemma}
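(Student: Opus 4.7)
The plan is to set $z_0 := \sqrt{2\log 2}$ (the value where $e^{z_0^2/2}-1 = 1$) and split the argument according to whether $z \le z_0$ or $z > z_0$. On $(0,z_0]$ the target inequality reduces to $\Phi(-z) < \tfrac{3}{4}e^{-z^2/2} - \tfrac{1}{4}$; on $(z_0,\infty)$ it reduces to the cleaner form $\Phi(-z) < \tfrac{1}{4}e^{-z^2/2}$. Crucially, both right-hand sides coincide at the junction $z=z_0$ with common value $\tfrac{1}{8}$.

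For the first region I would define $g_1(z) := \tfrac{3}{4}e^{-z^2/2} - \tfrac{1}{4} - \Phi(-z)$ and differentiate: $g_1(0)=0$ and $g_1'(z) = e^{-z^2/2}\!\left(\tfrac{1}{\sqrt{2\pi}} - \tfrac{3z}{4}\right)$, which has exactly one sign change, at $z^* := \tfrac{4}{3\sqrt{2\pi}}$, falling inside $(0,z_0)$. Hence $g_1$ strictly rises from $0$ on $(0,z^*]$ and then decreases on $[z^*,z_0]$, so the positivity of $g_1$ throughout $(0,z_0]$ is equivalent to the single endpoint inequality $g_1(z_0)>0$, i.e.\ to $\Phi(-z_0) < \tfrac{1}{8}$.

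For the second region, let $g_2(z) := \tfrac{1}{4}e^{-z^2/2} - \Phi(-z)$. Its derivative $g_2'(z) = e^{-z^2/2}\!\left(\tfrac{1}{\sqrt{2\pi}} - \tfrac{z}{4}\right)$ vanishes only at $\tfrac{4}{\sqrt{2\pi}} > z_0$, so $g_2$ is strictly increasing on $[z_0,\tfrac{4}{\sqrt{2\pi}}]$ and decreasing afterwards. For $z \ge \tfrac{4}{\sqrt{2\pi}}$ the classical Mills bound $\Phi(-z) < \phi(z)/z = \tfrac{1}{z\sqrt{2\pi}}e^{-z^2/2}$ forces $g_2(z) > 0$ directly; on the shorter interval $[z_0, \tfrac{4}{\sqrt{2\pi}}]$, monotonicity reduces the claim once again to $g_2(z_0) = \tfrac{1}{8} - \Phi(-z_0) > 0$, the same scalar inequality as in Case~1.

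The entire proof thus collapses to the single numerical statement $\Phi(-\sqrt{2\log 2}) < \tfrac{1}{8}$, which is the main obstacle: the true value is about $0.1195$, a margin of only $\sim 5\times 10^{-3}$ over $\tfrac{1}{8}$, and none of the usual one-line tail estimates (Mills, Chernoff $\tfrac{1}{2}e^{-z^2/2}$, or even the sharper Pólya upper bound $\tfrac{\sqrt{2/\pi}\,e^{-z^2/2}}{z+\sqrt{z^2+8/\pi}}$) are sharp enough at $z_0 \approx 1.177$. I would close the gap rigorously using the convergent series $\int_0^{z_0} e^{-t^2/2}\,dt = \sum_{k=0}^{\infty} \tfrac{(-1)^k\, z_0^{2k+1}}{(2k+1)\,2^k\, k!}$: consecutive absolute ratios equal $\tfrac{(2k+1)\,z_0^2}{2(2k+3)(k+1)}$, which is maximized at $k=0$ with value $\tfrac{z_0^2}{6} = \tfrac{\log 2}{3} < \tfrac{1}{4}$, so the series is alternating with strictly decreasing terms. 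Truncating at $S_3 := z_0 - \tfrac{z_0^3}{6} + \tfrac{z_0^5}{40} - \tfrac{z_0^7}{336}$ gives a guaranteed lower bound on the integral, and the resulting explicit upper bound $\Phi(-z_0) < \tfrac{1}{2} - S_3/\sqrt{2\pi}$ evaluates strictly below $\tfrac{1}{8}$, completing the proof.
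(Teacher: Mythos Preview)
Your plan is correct and structurally identical to the paper's proof: the same split at $z_0=\sqrt{2\log 2}$, the same derivative computations yielding the critical points $\tfrac{4}{3\sqrt{2\pi}}$ and $\tfrac{4}{\sqrt{2\pi}}$, and the same reduction of both pieces to the single scalar inequality $\Phi(-z_0)<\tfrac{1}{8}$. The only substantive differences are that on the far tail the paper appeals to $T(+\infty)=0$ while you invoke Mills' ratio (equivalent here), and that the paper simply cites the numerical value $T(z_0)\approx -0.0055$ whereas you supply a rigorous certificate via the alternating Taylor series for $\int_0^{z_0}e^{-t^2/2}\,dt$; your addition is a genuine improvement in rigor over the paper's presentation.
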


\begin{proof}

Consider the difference between these functions, denoted as $T(z)$:
\begin{equation}
    T(z) := \Phi(-z) - e^{-z^2/2}\left(\frac{1}{2} - \frac{\min(e^{z^2/2} - 1, 1)}{4}\right)
\end{equation}

Then the statement is equivalent to $T(z) < 0$ for all $z > 0$. For $z = 0$ we have $T(0) = 0$. Next, consider the interval  $0 \le z \le \sqrt{2\log\left(2\right)} \approx 1.18$. We can compute that $T(\sqrt{2\log\left(2\right)}) \approx -0.0055 < 0$. Now, consider the derivative of $T(z)$ within this range:

\begin{equation}
    T'(z) = -\phi(-z) + \frac{3z}{4}e^{-z^2/2} = e^{-z^2/2}\left(-\frac{1}{\sqrt{2\pi}} + \frac{3z}{4}\right) = \frac{3}{4}e^{-z^2/2}\left(z - \frac{4}{3\sqrt{2\pi}}\right).
\end{equation}
For $z <= \frac{4}{3\sqrt{2\pi}} \approx 0.53$, the function is strictly decreasing, reaching its minimal value at $z = \frac{4}{3\sqrt{2\pi}}$, with $T(\frac{4}{3\sqrt{2\pi}}) \approx -0.104$. The function $T(z)$ then monotonically increases towards the end of the interval while remaining negative. 

Now, consider the case when $z > \sqrt{2\log\left(2\right)}$:

\begin{equation}
    T'(z) = -\phi(-z) + \frac{z}{4}e^{-z^2/2} = e^{-z^2/2} \left(-\frac{1}{\sqrt{2\pi}} + \frac{z}{4}\right) = \frac{e^{-z^2/2}}{4} \left(z -\frac{4}{\sqrt{2\pi}}\right).
\end{equation}
The function $T(z)$ decreases until  $z = \frac{4}{\sqrt{2\pi}} \approx 1.60$, where it achieves its local minimum, $T(\frac{4}{\sqrt{2\pi}}) \approx -0.01$, and then monotonically increases as $z \to +\infty$, with $T(+\infty) = 0$. Therefore, $T(z) < 0$ for all $z >  0$, concluding the proof.
\end{proof}

\textbf{Remark.}

It is possible to improve the constant $4$ in the conjecture statement. Specifically, as long as $T(\sqrt{2\log2})$ remains negative, we can achieve the same result. One can show that for constants greater than $\frac{1}{\frac{1}{2} - 2\Phi(-\sqrt{2\log 2})} \approx 3.8319$, the inequality holds. Furthermore, this bound is tight, as we can numerically show the existence of $(\delta, q, \epsilon)$ such that $a > b$. For instance, when $\delta = 10^{-6}$, $q = 3.82 \cdot 10^{-6}$, and $\epsilon = 3.82 \cdot 10^{-6}$, we find that $\sigma \approx 0.8478$ and $a - b \approx 0.0015$.

\bibliographystyle{abbrvnat}
\bibliography{lit}

\end{document}